\documentclass[runningheads]{llncs}

\usepackage{url}
\usepackage{courier}
\usepackage{amsmath}
\usepackage{multirow}
\usepackage{listings}
\usepackage{graphicx}

\long\def\BOC#1\EOC{\message{(Commented text )}}
\long\def\BOCC#1\EOCC{\message{(Commented text )}}
\long\def\BOCCC#1\EOCCC{\message{(Commented text )}}

\def\ar{\leftarrow}
\def\beq{\begin{equation}}
\def\eeq#1{\label{#1}\end{equation}}
\def\ba{\begin{array}}
\def\ea{\end{array}}
\def\i#1{\hbox{\it #1\/}}

\def\C{$\mathcal{C}$} 
\def\BC{$\mathcal{BC}$}
\def\fm{\i{FM}}
\def\mo#1{{\bf{#1}}}

\def\ar{\leftarrow}
\def\rar{\rightarrow}

\def\mvis{\!=\!}
\def\false{\hbox{\sc false}}
\def\true{\hbox{\sc true}}

\def\proof{\noindent{\bf Proof}.\hspace{3mm}}
\def\qed{\quad \vrule height7.5pt width4.17pt depth0pt \medskip}

\newtheorem{thm}{Theorem}

\begin{document}

\mainmatter

\title{{\sc Cplus2ASP}: Computing Action Language ${\cal C}$+ \\in Answer Set
  Programming}

\author{Joseph Babb \and Joohyung Lee}
\institute{
School of Computing, Informatics, and Decision Systems Engineering \\
Arizona State University, Tempe, USA \\
\email{\{Joseph.Babb,joolee\}@asu.edu}
}

\maketitle

\begin{abstract}
We present Version 2 of system {\sc Cplus2ASP}, which implements the
definite fragment of action language ${\cal C}$+. Its input language
is fully compatible with the language of the Causal Calculator
Version~2, but the new system is significantly faster thanks to modern
answer set solving techniques. The translation implemented in the
system is a composition of several recent theoretical results. The
system orchestrates a tool chain, consisting of {\sc f2lp},  {\sc
  clingo}, {\sc iclingo}, and {\sc as2transition}.  Under the
incremental execution mode, the system translates a ${\cal C}$+
description into the input language of {\sc iclingo}, exploiting its
incremental grounding mechanism. The correctness of this execution is
justified by the module theorem extended to programs with nested
expressions. In addition, the input language of the system has many
useful features, such as external atoms by means of Lua calls and the
user interactive mode.  The system supports extensible multi-modal
translations for other action languages, such as ${\cal B}$
and ${\cal BC}$, as well.
\end{abstract}

\section{Introduction} \label{sec:intro}

Action language ${\cal C}$+ is a high level language for nonmonotonic
causal theories, which allows us to describe transition systems
succinctly~\cite{giu04}. The definite fragment of
${\cal C}$+ is expressive enough to represent various properties
of actions, and was implemented in Version 2 of the Causal Calculator
({\sc CCalc})\footnote{%
\url{http://www.cs.utexas.edu/users/tag/cc}}. 
The system translates an action description in ${\cal C}$+ into formulas in
propositional logic and calls SAT solvers to compute the models. 
Though {\sc CCalc} is not a highly optimized system, it has been used
to solve several challenging commonsense reasoning problems, including
problems of nontrivial size~\cite{akm04}, to provide a group of robots
with high-level reasoning \cite{caldiran09bridging}, to give executable
specifications of norm-governed computational societies
\cite{art09,desai07representing}, and to automate the analyses of
business processes under authorization constraints \cite{arm09}.

An alternative way to compute the definite fragment of Boolean-valued
$\cal C$+ is to translate it into answer set programs as studied
in~\cite{mcc97c,ferraris12representing}. The system reported
in~\cite{dog01} and system {\sc coala}~\cite{gebser10coala} are
implementations of this method and accept the definite fragment of
${\cal C}$, a predecessor of language ${\cal C}$+. In particular,
{\sc coala} was shown to be effective for several benchmark problems
due to efficiency of ASP solvers. 

However, the input language of {\sc coala} is missing several important
features of~${\cal C}$+, such as multi-valued fluents, defined
fluents, additive fluents, defeasible causal laws, and syntactically
complex formulas. Also, it does not support many useful language
constructs allowed in the input language of {\sc CCalc}, such as
user-defined macros, implicit declarations of sorts, and external
atoms. 

The design aim of system {\sc Cplus2ASP}~\cite{casolary11representing}
is to utilize the efficient ASP solving techniques as in {\sc coala}
while supporting the full features of the input language of~{\sc CCalc}. 
Its design utilizes a standard library with meta-level sorts and
meta-level variables, which yields a simple modular and extensible
method to represent {\sc CCalc} input programs in ASP. 
However, the first version of the system was a prototype
implementation for a proof of concept.

This paper presents Version 2 of {\sc Cplus2ASP}, which is
significantly enhanced in several ways. 

\begin{itemize}
\item  Its input language is fully compatible with the language of
  {\sc CCalc} incorporating the features that were missing in 
  {\sc Cplus2ASP} v1. 

\item  The system supports extensible multi-modal translations for
  different action languages. Currently, in addition to ${\cal C}$+,
  the system supports language ${\cal B}$~\cite{gel98}, and a recently
  proposed language ${\cal BC}$~\cite{lee13action}. Language ${\cal
    BC}$ combines features of languages ${\cal B}$ and ${\cal C}$, and
  allows Prolog-style recursive definitions, which are not allowed in
  $\cal C$+. 

\item The system provides two execution modes: the command line mode
  and the interactive mode. The interactive mode gives a user-friendly
  interface for running various commands.

\item In {\sc CCalc}, external atoms are useful for some deterministic
  computation which is difficult to express directly in ${\cal C}$+.
  For example, they were utilized in~\cite{caldiran09bridging} for a
  loose integration of task planning and motion planning.
  The new version of {\sc Cplus2ASP} supports this feature by
  utilizing Lua call available in the language of {\sc gringo}. 

\item The new system provides an incremental computation of action
  descriptions, which often saves a significant amount of time. 
  Since the translation of action descriptions into answer set
  programs may contain complex formulas, the justification of this
  computation uses the module theorem
  from~\cite{babb12module}, which extends the module theorem
  from~\cite{janhunen09modularity} to first-order formulas
  under the stable model semantics~\cite{ferraris11stable}.
\end{itemize}


In~\cite{casolary11representing}, the translation of a definite
${\cal C}$+ description into the input language of ASP solvers was
explained in multiple steps. A ${\cal C}$+ description is first
turned into a multi-valued causal theory, and then to a Boolean-valued
causal theory by the method described in~\cite{lee05automated}. 
The resulting theory is further turned into logic programs with nested
expressions by the translation in~\cite{ferraris12representing}, and
then the translation in~\cite{lee09system} is applied to turn it
into the input language of {\sc gringo}. 

In Section~\ref{sec:trans}, we explain the translation in a simpler
way by avoiding reference to causal theories but instead by
using a recent proposal of multi-valued propositional formulas
under the stable model semantics~\cite{bartholomew12stable}. 
A ${\cal C}$+ description is turned into multi-valued formulas under
the stable model semantics, which is further turned into propositional
formulas under the stable model semantics \cite{fer05}. The result is
further turned into the input language of {\sc gringo} by the translation
described in~\cite{lee09system}.
Section~\ref{sec:system} introduces system {\sc Cplus2ASP} v2 and the
features of its input language, and Section~\ref{sec:experiments}
compares the system with other similar systems. Our experiments show
that the new system is significantly faster than the others.

\section{From ${\cal C}$+ to ASP}  \label{sec:trans}

\subsection{Review: Multi-Valued Propositional Formulas} \label{ssec:mvpf}


A {\sl (multi-valued propositional) signature} is a set $\sigma$~of
symbols called {\sl constants}, along with a nonempty finite
set~$\i{Dom}(c)$ of symbols, disjoint from $\sigma$, assigned to each
constant~$c$. $\i{Dom}(c)$ is called the {\sl domain} of~$c$.
A {\sl Boolean} constant is one whose domain is the set~${\{\true,
  \false\}}$. 
An {\sl atom} of a signature~$\sigma$ is an expression of the form
${c\mvis v}$ (``the value of~$c$ is~$v$'') where $c \in \sigma$ and $v
\in \i{Dom}(c)$. 
A {\sl (multi-valued propositional) formula} of~$\sigma$ is a
propositional combination of atoms.
We often write $G\ar F$, in a rule form as in logic programs, 
to denote the implication $F\rar G$. 
A finite set of formulas is identified with the conjunction of the
formulas in the set.

A {\sl (multi-valued propositional) interpretation} of~$\sigma$ is a
function that maps every element of~$\sigma$ to an element in its
domain.  An interpretation~$I$ {\sl satisfies} an atom ${c\mvis v}$,
(symbolically, ${I\models c\mvis v}$) if \hbox{$I(c)=v$}.
The satisfaction relation is extended from atoms to arbitrary
formulas according to the usual truth tables for the propositional
connectives. $I$ is a {\sl model} of a formula if $I$ satisfies it. 
We often write an interpretation $I$ with the set of atoms $c\mvis v$
such that $I(c)=v$. 

The {\sl stable} models of a multi-valued propositional formula can be 
defined in terms of a reduct~\cite{bartholomew12stable}. 
Let $F$ be a multi-valued propositional formula of signature $\sigma$,
and let $I$ be a multi-valued propositional interpretation of
$\sigma$. The reduct $F^I$ of a multi-valued propositional formula $F$
relative to a multi-valued propositional interpretation $I$ is the
formula obtained from $F$ by replacing each maximal subformula that is
not satisfied by $I$ with $\bot$. $I$ is a {\sl (multi-valued) stable
  model} of $F$ if $I$ is the unique multi-valued interpretation
of~$\sigma$ that satisfies $F^I$. 

\begin{example} 
Assume $\sigma=\{c\}$, and $\i{Dom}(c)=\{1,2,3\}$. 
Each of the three interpretations
is a model of $c\mvis 1\ar c\mvis 1$, but none of them is stable
because each reduct has no unique model. 
Formula~$c\mvis 1\ar\neg\neg (c\mvis 1)$ has the same models as 
$c\mvis 1\ar c\mvis 1$, but it has one stable model, $\{c\mvis 1\}$:
the reduct of the formula relative to this interpretation is 
$c\mvis 1\ar\neg\bot$, and $\{c\mvis 1\}$ is its unique model. 
Similarly, one can check that $(c\mvis 1\ar\neg\neg (c\mvis 1))\land
(c\mvis 2)$ has only one stable model $\{c\mvis 2\}$,  which
illustrates nonmonotonicity of the semantics.
\end{example}

\subsection{${\cal C}$+ as Multi-valued Propositional Formulas under SM} 

Begin with a multi-valued signature 
partitioned into {\sl fluent} constants and
{\sl action} constants.  The fluent constants are assumed
to be further partitioned into {\sl simple} and {\sl statically determined}.

A {\sl fluent formula} is a formula such that all constants occurring in it
are fluent constants. An {\sl action formula} is a formula that contains at
least one action constant and no fluent constants.

A {\sl static law} is an expression of the form
\beq
{\bf caused}\ F\ {\bf if}\ G
\eeq{static}
where $F$ and $G$ are fluent formulas.
An {\sl action dynamic law} is an expression of the form~(\ref{static})
in which $F$ is an action formula and $G$ is a formula.
A {\sl fluent dynamic law} is an expression of the form
\beq
{\bf caused}\ F\ {\bf if}\ G\ {\bf after}\ H
\eeq{dynamic}
where~$F$ and~$G$ are fluent formulas and $H$ is a formula, provided that~$F$
does not contain statically determined constants.
A {\sl causal law} is a static law, or an action dynamic law, or a fluent
dynamic law. An {\sl action description} is a finite set of causal
laws. 

An action description is called {\sl definite} if $F$ in every causal
law \eqref{static} and \eqref{dynamic} is either an atom or $\bot$.

For any definite action description~$D$ and any nonnegative
integer~$m$, the multi-valued propositional
theory~$\i{cplus2mvpf}(D,m)$ (``\C+ to multi-valued propositional
formulas'') is defined as follows.\footnote{%
The translation can be applied to non-definite \C+ descriptions as
well, but then the semantics does not agree with ${\cal C}$+.}
The signature of~$\i{cplus2mvpf}(D,m)$ consists of the
pairs $i\!:\!c$ such that
\begin{itemize}
\item
$i\in\{0,\dots,m\}$ and~$c$ is a fluent constant of~$D$, or
\item
$i\in\{0,\dots,m-1\}$ and~$c$ is an action constant of~$D$.
\end{itemize}
The domain of $i\!:\!c$ is the same as the domain of~$c$.  Recall that
by $i\!:\!F$ we
denote the result of inserting $i\!:$ in front of every occurrence of every
constant in a formula~$F$, and similarly for a set of formulas.
The rules of~$\i{cplus2mvpf}(D,m)$ are:
\beq
  i\!:\!F\ar \neg\neg (i\!:\!G)
\eeq{tr1}
for every static law~(\ref{static}) in~$D$ and every~$i\in\{0,\dots,m\}$,
and for every action dynamic law~(\ref{static}) in~$D$ and
every~$i\in\{0,\dots,m-1\}$;
\beq
i\!:\!F\ar \neg\neg(i\!:\!G) \wedge (i\!-\!1\!:\!H)
\eeq{tr2f}
for every fluent dynamic law~(\ref{dynamic})
in~$D$ and every~$i\in\{1,\dots,m\}$;
\beq
0\!:\!c\mvis v \ar \neg\neg (0\!:\!c\mvis v)
\eeq{tr3}
for every simple fluent constant~$c$ and every~$v\in\i{Dom}(c)$.

Note how the definition of $\i{cplus2mvpf}(D,m)$ treats simple fluent
constants and statically determined fluent constants in different
ways: rules~(\ref{tr3}) are included only when~$c$ is simple.

The translation of \BC\ into multi-valued propositional formulas is
similar.  Due to lack of space, we refer the reader
to~\cite[Section~9]{lee13action}.

\subsection{Translating Multi-Valued Propositional Formulas to
  Propositional Formulas under SM}

Note that even when we restrict attention to Boolean constants only, 
the stable model semantics for multi-valued propositional formulas
does not coincide with the stable model semantics for propositional
formulas. Syntactically, they are different (one uses expressions of
the form $c=\true$ and $c=\false$; the other uses propositional
atoms). Semantically, the former relies on the uniqueness of 
(Boolean)-functions, while the latter relies on the minimization on
propositional atoms. Nonetheless there is a simple reduction from the
former to the latter. 

Begin with a multi-valued propositional signature~$\sigma$. 
By~$\sigma^{prop}$ we denote the signature consisting of Boolean
constants~$c(v)$ for all constants $c$ in $\sigma$ and all $v\in
\i{Dom}(c)$.  
For any multi-valued propositional formula $F$ of $\sigma$, by
$F^{prop}$ we denote the propositional formula that is obtained from
$F$ by replacing each occurrence of a multi-valued atom ${c\mvis v}$
with $c(v)$. 
For any constant $c$ with $\i{Dom}(c)$, by $\i{UEC}(c)$ we denote
the existence and uniqueness constraints for $c$: \\[-2mm]
\[
   \bot\ar (c(v)\land c(v'))  \\[-1mm]
\]
for all $v,v' \in \i{Dom}(c)$ such that $v\neq v'$, and  \\[-1mm]
\[ 
  \bot\ar\neg \bigvee_{v\in\mathit{Dom}(c)} c(v)\ .   \\[-1mm]
\]
By $\i{UEC}_\sigma$ we denote the conjunction of $\i{UEC}(c)$ for all
$c\in\sigma$.


For any interpretation $I$ of~$\sigma$, by $I^{prop}$ we denote the
interpretation of $\sigma^{prop}$ that is obtained from~$I$ by
defining 
$I^{prop}\models c(v)$ iff $I\models c\mvis v$.

There is a one-to-one correspondence between the stable models of $F$
and the stable models of $F^{prop}$. The following theorem is a
special case of Corollary~1 from~\cite{bartholomew12stable}.

\begin{thm} \label{prop:ct-definite-e}
Let $F$ be a multi-valued propositional formula of a signature
$\sigma$ such that, for every constant~$c$ in~$\sigma$, $\i{Dom}(c)$
has at least two elements. 
{\bf (I)} An interpretation $I$ of~$\sigma$ is a multi-valued stable model
of~$F$ iff $I^{prop}$ is a propositional stable model
of~$F^{prop}\land\i{UEC}_\sigma$.
{\bf (II)} An interpretation $J$ of $\sigma^{prop}$ is a propositional stable
model of~$F^{prop}\land\i{UEC}_\sigma$ iff $J=I^{prop}$ for some
multi-valued stable model $I$ of $F$.
\end{thm}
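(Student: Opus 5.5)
The plan is a direct proof via the reducts, comparing the multi-valued reduct $F^I$ with the propositional reduct $(F^{prop}\land\i{UEC}_\sigma)^{I^{prop}}$. Here a propositional interpretation $J$ is a stable model of a formula $G$ iff $J\models G$ and no proper subset of $J$ satisfies $G^J$, where $G^J$ replaces maximal subformulas not satisfied by $J$ with $\bot$ \cite{fer05}.

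\emph{Key observation.} For every formula $F$ of $\sigma$ and every interpretation $I$ of $\sigma$, we have $I\models F$ iff $I^{prop}\models F^{prop}$. This follows by induction on $F$, since $I^{prop}\models c(v)$ iff $I\models c\mvis v$. Consequently $(F^I)^{prop}=(F^{prop})^{I^{prop}}$, because the maximal unsatisfied subformulas correspond one-to-one.

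\emph{The UEC part of the reduct.} Assume $I^{prop}$ satisfies $\i{UEC}_\sigma$. Then each constraint $\bot\ar c(v)\land c(v')$ has its antecedent false, so the reduct of this conjunct is $\bot\ar\bot$ or its equivalent. The existence constraint $\bot\ar\neg\bigvee_v c(v)$ has reduct $\bot\ar\bot$ as well, since its negated antecedent is unsatisfied. Hence the UEC conjuncts contribute only tautologies to the reduct.

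\emph{Proof of (I).} Fix $I$ and put $J=I^{prop}$; note that $J$ satisfies $\i{UEC}_\sigma$.

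For the left-to-right direction, suppose $I$ is a multi-valued stable model of $F$. Then $I\models F$, so $J\models F^{prop}\land\i{UEC}_\sigma$. Suppose toward a contradiction that some $K\subsetneq J$ satisfies $(F^{prop})^J$. Since $J$ contains exactly one atom $c(v)$ for each $c$, the set $K$ omits $c(I(c))$ for some $c$, so $K$ has no true atom for that $c$. This is the main obstacle: such a $K$ is not of the form $I'^{prop}$, so we cannot translate it back directly. To handle it, I would use the hypothesis $|\i{Dom}(c)|\ge2$ together with a positivity property of reducts. Every atom occurring in $F^I$ is true in $I$, because unsatisfied atoms are replaced by $\bot$; hence in $(F^{prop})^J$ every remaining atom is of the form $c(I(c))$. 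Now define $I'$ to agree with $I$ except that, for each $c$ with $c(I(c))\notin K$, we set $I'(c)$ to some $v\ne I(c)$, which exists by the two-element hypothesis. Then $I'^{prop}$ and $K$ agree on all atoms $c(I(c))$, which are the only atoms occurring in $(F^{prop})^J$. So $I'^{prop}\models (F^{prop})^J=(F^I)^{prop}$, and therefore $I'\models F^I$ with $I'\ne I$. This contradicts the uniqueness in the definition of a multi-valued stable model.

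For the right-to-left direction, suppose $J$ is a propositional stable model. Then $I\models F$, so $I\models F^I$. Suppose $I'\ne I$ also satisfies $F^I$. Let $K=J\cap I'^{prop}$, the set of atoms $c(I(c))$ with $I'(c)=I(c)$. This is a proper subset of $J$. By the same observation that only atoms $c(I(c))$ occur in the reduct, $K$ agrees with $I'^{prop}$ on those atoms. Hence $K\models (F^{prop})^J$, and since the UEC part of the reduct is tautological, $K$ satisfies the whole reduct, contradicting minimality of $J$.

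\emph{Proof of (II).} If $J$ is a propositional stable model of $F^{prop}\land\i{UEC}_\sigma$, then $J\models\i{UEC}_\sigma$, so $J$ contains exactly one $c(v)$ per constant. Hence $J=I^{prop}$ for the $I$ with $I(c)=v$, and by (I) this $I$ is a multi-valued stable model of $F$. The converse direction of (II) is exactly (I).
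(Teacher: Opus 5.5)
Your proof is correct. It takes a different route from the paper, which gives no argument of its own: it simply cites the statement as a special case of Corollary~1 of \cite{bartholomew12stable}, the general correspondence developed there between stable models with multi-valued constants and propositional stable models with uniqueness and existence constraints. You instead argue directly from the two reduct definitions, using three elementary facts:
\begin{itemize}
\item $(F^I)^{prop}=(F^{prop})^{I^{prop}}$;
\item every atom surviving in a reduct is true in the interpretation, so $(F^{prop})^{I^{prop}}$ mentions only atoms of the form $c(I(c))$;
\item the reduct of $\i{UEC}_\sigma$ relative to $I^{prop}$ is a conjunction of copies of $\bot\rightarrow\bot$.
\end{itemize}
The citation buys generality and places the result inside the framework the paper relies on elsewhere, but it leaves the role of the hypothesis opaque. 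Your argument is self-contained and shows exactly where $|\i{Dom}(c)|\ge 2$ is used. It is needed only in the left-to-right direction of (I), to turn a proper subset $K$ of $I^{prop}$ into a genuine interpretation $I'\ne I$. Such a $K$ violates the existence constraint, but the reduct has neutralized that constraint. The right-to-left direction of (I) and the reduction of (II) to (I) do not use the hypothesis. Your proof also explains why the hypothesis cannot be dropped. Take $\i{Dom}(c)=\{1\}$ and $F$ to be $c\mvis 1\ar c\mvis 1$. Then $\{c\mvis 1\}$ is a multi-valued stable model. Yet $\emptyset$ satisfies the reduct $(c(1)\rightarrow c(1))\land(\bot\rightarrow\bot)$ relative to $\{c(1)\}$, so $\{c(1)\}$ is not a propositional stable model.
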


\subsection{Incremental Computation of ${\cal C}$+} \label{ssec:icplus}

In answer set planning~\cite{lif02}, the length of a plan needs to be
specified. When the length is not known in advance, a plan can be
found by iteratively increasing the possible plan length. 
{\sc Cplus2ASP} Version~1 calls {\sc clingo} for each such
iteration, resulting in redundant computations each time.


Instead, by default, {\sc Cplus2ASP} v2 uses
{\sc iclingo}, which accepts {\em incremental logic programs}.
%
%
Gebser {\sl et al.}~\cite{gebser11reactive} define an
incremental logic program to be a triple 
$\langle B,P[t],Q[t]\rangle$, where $B$ is a disjunctive logic
program, and $P[t]$, $Q[t]$ are incrementally parameterized
disjunctive logic programs. Informally, $B$ is the {\sl base} program
component, which describes static knowledge; $P[t]$ is the {\sl
  cumulative} program component, which contains information regarding
every step $t$ that should be accumulated during execution; $Q[t]$ is
the {\sl volatile query} program component, containing constraints or
information regarding the final step. 
Conceptually, system {\sc iclingo} computes $B\cup P[1]\cup\dots\cup
P[k]\cup Q[k]$ by increasing $k$ one by one, but avoids reproducing
ground rules in each step. Also, previously learned heuristics,
conflicts, or loops are reused without having to recompute them. This
method turns out to be quite effective. The correctness of this
computation assumes that $\langle B,P[t],Q[t]\rangle$ is {\em acyclic}
\cite{babb12module}. 


Below we show that the translation from \C+ described previously 
can be modified to yield an incremental logic program, which is always
acyclic, and thus can be computed by {\sc iclingo}.




For any ${\cal C}$+ description $D$, and any formula $F(t)$ (called a
{\em query}) of the same signature as $\i{cplus2mvpf}(D,t)$, where $t$
is a parameter denoting a nonnegative integer, we define the corresponding
incremental logic program $\langle B,P[t],Q[t]\rangle$ as follows: 
\begin{itemize}
\item 
$B$ consists of 
\begin{itemize}
\item  $0\!:\!\i{UEC}(f)$ for every fluent constant $f$; 
\item  $0\!:\!c(v)\ar\neg\neg (0\!:\!c(v))$ for every simple fluent
  $c$ and every $v\in\i{Dom}(c)$; 
\item  $0\!:\!F^{prop}\ar\neg\neg (0\!:\!G^{prop})$ for every static
  law~\eqref{static} in $D$. 
\end{itemize}
\item $P[t]$ ($t\ge 1$) consists of 
\begin{itemize}
\item  $t\!:\!\i{UEC}(f)$ for every fluent constant $f$; 
\item  $(t\!-\!1)\!:\!\i{UEC}(a)$ for every action constant $a$; 
\item  $t\!:\!F^{prop}\ar\neg\neg (t\!:\!G^{prop})$ for every static
  law~\eqref{static} in $D$; 
\item  $(t\!-\!1)\!:\!F^{prop}\ar\neg\neg ((t\!-\!1)\!:\!G^{prop})$
  for every action dynamic law~\eqref{static} in $D$; 
\item  $t\!:\!F^{prop}\ar\neg\neg 
  (t\!:\!G^{prop})\land((t\!-\!1)\!:\!H^{prop})$ for
  every fluent dynamic law~\eqref{dynamic} in~$D$.
\end{itemize}
\item $Q[t]$ is $\bot\ar\neg (F[t])^{prop}$.
\end{itemize}

Upon receiving this input and a range of nonnegative integers $[{\rm
  min}\dots {\rm max}]$, {\sc iclingo} will find an answer set of the
module  $\mo{R}_k$ with $k={\rm min}, {\rm min}+1,\dots$ until it
finds an answer set, or $k={\rm max}$, whichever comes first. 
In~\cite{babb12module}, module $\mo{R}_k$ is defined from $\langle
B,P[t],Q[t]\rangle$ as follows. 
\begin{align*}
 \mo{P}_0 &\ =\ \fm(B,\emptyset), \\
 \mo{P}_i &\ =\ \mo{P}_{i-1}\ \sqcup\ \fm(P[i], \i{Out}(\mo{P}_{i-1})),
 &(1\le i\le k) \\ 
 \mo{R}_k &\ =\ \mo{P}_k\ \sqcup\ \fm(Q[k], \i{Out}(\mo{P}_k))\ .
\end{align*}
(Due to lack of space, we refer the reader to \cite{babb12module} for
the notations.)

The following theorem states the correctness of incremental execution
in {\sc Cplus2ASP}.
\begin{thm}
For any definite ${\cal C}$+ description $D$, any non-negative integer
$k$, and any formula  $F(k)$ of the same signature as
$\i{cplus2mvpf}(D,k)$, 
an interpretation $I$ is a multi-valued stable model of
$\i{cplus2mvpf}(D,k)\cup\{\bot\ar\neg F(k)\}$ 
iff $I^{prop}$ is a stable model of $\mo{R}_k$.
Conversely, an interpretation $J$ is a stable model of $\mo{R}_k$ 
iff $J = I^{prop}$ for some multi-valued stable model of
$cplus2mvpf(D,k)\cup\{\bot\ar\neg F(k)\}$.
\end{thm}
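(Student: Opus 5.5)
The proof goes through the unrolled program
\[
 \Pi_k \;=\; B\cup P[1]\cup\dots\cup P[k]\cup Q[k],
\]
viewed as a single propositional formula. It has two parts. First, $\Pi_k$ is syntactically the translation $(\i{cplus2mvpf}(D,k)\cup\{\bot\ar\neg F(k)\})^{prop}\land\i{UEC}_\sigma$, where $\sigma$ is the signature of $\i{cplus2mvpf}(D,k)$, so Theorem~\ref{prop:ct-definite-e} applies. Second, the stable models of the module $\mo{R}_k$ are exactly the stable models of $\Pi_k$, which follows from the module theorem of~\cite{babb12module} once acyclicity is established.

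\textbf{Step 1 (syntactic identity).} Compare the conjuncts of $\Pi_k$ with those of $\i{cplus2mvpf}(D,k)$ under $\cdot^{prop}$, which commutes with prefixing $i\!:$ and with the connectives.
\begin{itemize}
\item Rules (\ref{tr1}) for static laws at $i=0$ are in $B$, and those at $1\le i\le k$ are in $P[i]$.
\item Rules (\ref{tr1}) for action dynamic laws at $0\le i\le k-1$ are in $P[i+1]$.
\item Rules (\ref{tr2f}) at $1\le i\le k$ are in $P[i]$.
\item Rules (\ref{tr3}) are in $B$.
\item The query $\bot\ar\neg F(k)$ becomes $Q[k]$.
\end{itemize}
For the constraints, $\i{UEC}(i\!:\!f)$ appears for fluents at every $i\in\{0,\dots,k\}$ (in $B$ and the $P[i]$), and $\i{UEC}(i\!:\!a)$ appears for actions at every $i\in\{0,\dots,k-1\}$ (in $P[i+1]$). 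Together these give exactly $\i{UEC}_\sigma$. Every domain should have at least two elements so that Theorem~\ref{prop:ct-definite-e} applies. I would either assume this for $D$, or note that a singleton-domain constant $c$ can be handled directly, since $c(v)\ar\neg\neg c(v)$ or the existence constraint makes the correspondence trivial.

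\textbf{Step 2 (incremental module equals the whole program).} By the module theorem of~\cite{babb12module}, if $\langle B,P[t],Q[t]\rangle$ is acyclic and the modules $\fm(B,\emptyset)$, $\fm(P[i],\i{Out}(\mo{P}_{i-1}))$ and $\fm(Q[k],\i{Out}(\mo{P}_k))$ are compatible, then the stable models of $\mo{R}_k$ are those of $\Pi_k$. Compatibility requires that each part defines only new atoms. The outputs of $P[i]$ are the atoms $i\!:\!c(v)$ for fluents and $(i\!-\!1)\!:\!a(v)$ for actions, since every rule head in $P[i]$ is of one of these forms or is $\bot$. These are disjoint from the outputs of earlier parts and from those of $Q[k]$, whose head is $\bot$, so $Q[k]$ defines nothing.

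Acyclicity is the main obstacle. Positive dependencies, i.e.\ those not under $\neg\neg$ or $\neg$, go only from heads at step $i$ to bodies at step $i-1$, through the $H$ parts of (\ref{tr2f}). All other body occurrences sit under $\neg\neg$, so they generate no positive dependency edges in the sense used for strongly connected components across modules in~\cite{babb12module}. The UEC rules have head $\bot$ and contribute no edges. Consequently no positive cycle crosses module boundaries, and joining the modules is valid.

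Finally, combining Step 2 with Step 1 and Theorem~\ref{prop:ct-definite-e}~(I) gives the first claim: $I$ is a multi-valued stable model of $\i{cplus2mvpf}(D,k)\cup\{\bot\ar\neg F(k)\}$ iff $I^{prop}$ is a stable model of $\Pi_k$, iff $I^{prop}$ is a stable model of $\mo{R}_k$. The converse claim follows in the same way from part~(II).
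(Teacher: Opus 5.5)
Your argument is the paper's own. The paper's sketch simply checks acyclicity (Definition~12 of~\cite{babb12module}) and invokes Proposition~5 there. It leaves implicit two things you spell out correctly: the matching of $B\cup P[1]\cup\dots\cup P[k]\cup Q[k]$ with $(\i{cplus2mvpf}(D,k)\cup\{\bot\ar\neg F(k)\})^{prop}\land\i{UEC}_\sigma$, and the final appeal to Theorem~\ref{prop:ct-definite-e}.

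One correction concerns singleton domains. You were right that the two-element hypothesis of Theorem~\ref{prop:ct-definite-e} is needed, but your fallback that such constants are trivial is false. The constraint $\bot\ar\neg c(v)$ can only eliminate candidate models; it cannot support $c(v)$ under the propositional stable model semantics. The rule $c(v)\ar\neg\neg c(v)$ is present only for simple fluents at time~$0$.

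For a counterexample, let $D$ have a single statically determined fluent $f$ with $\i{Dom}(f)=\{v\}$ and no causal laws, and take $k=0$, $F(0)=\top$.
\begin{itemize}
\item $\{0\!:\!f\mvis v\}$ is a multi-valued stable model, since it is the only interpretation.
\item $B\cup Q[0]$, and hence $\mo{R}_0$, consists of $\bot\ar\neg\,0\!:\!f(v)$ and $\bot\ar\neg\top$. It has no stable model, because the reduct relative to $\{0\!:\!f(v)\}$ is satisfied by $\emptyset$.
\end{itemize}
So the statement itself fails without the two-element hypothesis, which the paper leaves implicit. Take your first option and assume it.
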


\proof (Sketch)
We can check that $\langle B,P[t],Q[t]\rangle$ obtained from the ${\cal
  C}$+ description and a query as above is acyclic according to
Definition 12 from~\cite{babb12module}. Then the claim follows from
Proposition~5 from~\cite{babb12module}.
\qed

The translation of \BC\ into an incremental logic program is
similar. 


\section{System \textsc{Cplus2ASP} v2} \label{sec:system}

\vspace{-5mm}

\begin{figure}[ht]
\centering 
\includegraphics[width=12cm,height=5cm]{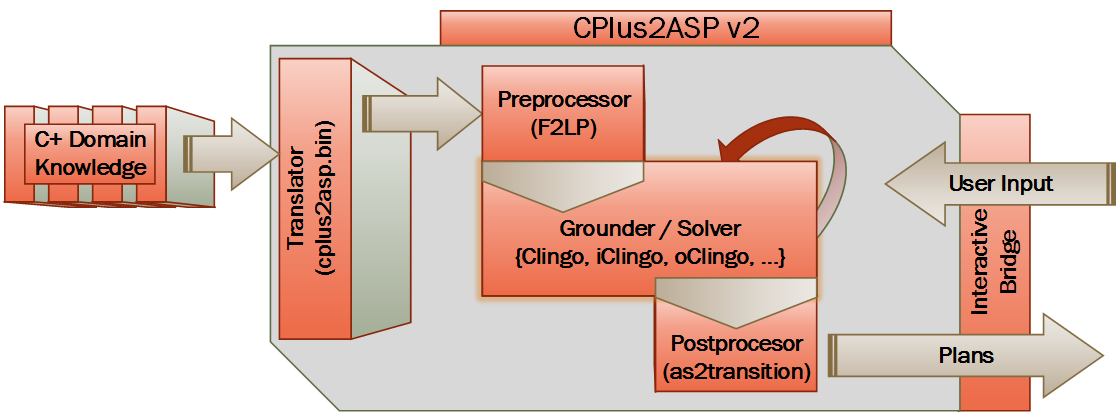}
\caption{{\sc Cplus2ASP} v2 System Architecture} \label{fig:Cplus2ASP2}
\end{figure}

System {\sc Cplus2ASP} v2 is a re-engineering of the prototypical {\sc
Cplus2ASP} v1 system \cite{casolary11representing} and is available
under Version 3 of the GNU Public License. 
\BOCC
{\sc Cplus2ASP} v2 provides
an implementation of the incremental \C+ translation discussed in
Section \ref{sec:prelim} as well as numerous other new features,
including:
\begin{itemize}
\item  Support for all POSIX compliant operating systems,\footnote{%
Support for Windows NT based operating systems is currently under
active development and will be made available in the near future.}

\item  An extensible multi-mode translation and execution system,
  which allows for multiple input and target formalisms.

\item An enhanced, fully interactive, command-line interface,

\item Support for external knowledge via Lua program integration, and
\item Support for the Action Language $\mathcal{BC}$.
\end{itemize}
\EOCC
Like its predecessor, {\sc Cplus2ASP} v2 uses a highly modular
architecture that is designed to take advantage of the existing tools,
including system {\sc f2lp}~\cite{lee09system} and highly-optimized ASP
grounders and solvers in addition to a number of packaged
sub-components.
Figure \ref{fig:Cplus2ASP2} shows a high-level conceptualization of
the interaction of the sub-components in the {\sc Cplus2ASP} v2 
architecture. 

For a description of the input language of {\sc Cplus2ASP},
we refer the reader to the {\sc Cplus2ASP} homepage at 
\url{http://reasoning.eas.asu.edu/cplus2asp} or {\sc CCalc} 2 homepage
at
\url{http://www.cs.utexas.edu/~tag/ccalc/}.
A typical run of {\sc Cplus2ASP} involves the user
interacting with the interactive bridge, a tightly-coupled shell-like
interface for {\sc Cplus2ASP}, in order to configure the {\sc Cplus2ASP}
run. {\sc Cplus2ASP.bin}, a translator sub-component, is then called to compile
a {\sc CCalc} 2 input program into a logic program containing complex
formulas.  Following this, system {\sc f2lp} further turns the
program in the  input language of {\sc gringo}. 
The result of this compilation is given to {\sc clingo}, or a
similar answer set solver, and one or more answer sets are
calculated. Finally, {\sc as2transition} is invoked in order to
format the answer sets into a readable format.


{\sc Cplus2ASP} accepts a {\sc CCalc} 2 style syntax of language ${\cal
  BC}$ as well, for which the user can select a different language
mode for running. In addition, {\sc Cplus2ASP} is able to provide two target
translations, a \textit{static}
translation to traditional ASP, and an \textit{incremental} variant, as
described in section \ref{ssec:icplus}. 


\BOCCC
Figure \ref{fig:bw} shows a formalization of the Blocks World domain
in the input language of {\sc Cplus2ASP}, which is also accepted by
{\sc CCalc}.
For a description of the input language of Cplus2ASP,
we refer the reader to the {\sc Cplus2ASP} homepage at 
\url{http://reasoning.eas.asu.edu/cplus2asp} or {\sc CCalc} 2 homepage
at
\url{http://www.cs.utexas.edu/users/tag/cc}.

\begin{figure}[h!]
\begin{lstlisting}
% File 'bw'

:- sorts
  location >> block.

:- objects
  table                     :: location.

:- constants
  loc(block)                :: inertialFluent(location);
  move(block)               :: exogenousAction;
  destination(block)        :: attribute(location) of move(block).

:- variables
  B,B1                      :: block;
  L                         :: location.

% two blocks can't be on the same block at the same time
constraint B\=B1 & loc(B)=loc(B1) ->> loc(B)=table.  

% effect of moving a block
move(B) causes loc(B)=L if destination(B)=L.

% a block can be moved only when it is clear
nonexecutable move(B) if loc(B1)=B.

% a block can be moved only to a position that is clear
nonexecutable move(B) 
   if destination(B)=loc(B1) & destination(B)\=table.

% a block can't be moved onto a block that is being moved also
nonexecutable move(B) & move(B1) if destination(B)=B1.
\end{lstlisting}
\caption{The Blocks World in {\sc Cplus2ASP}}\label{fig:bw}
\end{figure}

\begin{figure}[h!]
\begin{lstlisting}
% File 'bw-test'

:- include 'bw'.

:- objects
  a,b,c,d                    :: block.

:- query
 label :: simple;
 maxstep :: 2;
 0: loc(a)=b, loc(b)=table, loc(c)=d, loc(d)=table;
 maxstep: loc(a)=table, loc(b)=a, loc(c)=table, loc(d)=c.
\end{lstlisting}
\caption{A Simple Query for The Blocks World in {\sc Cplus2ASP}}\label{fig:bw-test}
\end{figure}

%
%

\NB{For a more complete description of the input language of {\sc
    Cplus2ASP} and the Causal Calculator, we refer the reader to ...}
\EOCCC

\subsection{Running Modes of System \textsc{Cplus2ASP} v2}

In this section we briefly review the usage of \textsc{Cplus2ASP}
v2. For more complete documentation and information on obtaining and
installing \textsc{Cplus2ASP} v2 we invite the reader to visit the
\textsc{Cplus2ASP} homepage.

\textsc{Cplus2ASP} v2 currently offers two distinct user-interaction
methods: command-line and interactive shell. A brief introduction to
both modes is provided below.

\vspace{-2mm}
\subsubsection{Using the Command-Line Mode}

The command-line mode is designed primarily for interacting with a script
or a seasoned {\sc Cplus2ASP} user who is familiar with the options
available to them. The command-line mode is the default
user-interaction mode when a query is provided while calling {\sc
  Cplus2ASP}. 



For instance, to run a query labeled ``{\tt simple}'' on a \C+
description stored in file {\tt bw-test}, one can run the command: 
\begin{verbatim} 
    cplus2asp bw-test query=simple
\end{verbatim} 
In order to run the command under the \BC\ semantics, the flag
\texttt{--language=bc} should be asserted in the command line
call.\footnote{The {\tt bw-test} example program, along with other
  examples, can be found from the {\sc Cplus2ASP} homepage.}

If more solutions are desired, the number of solutions can be appended
to the end of the command-line. As an example, appending
\texttt{4} to the end of the command will return up to four solutions,
while appending \texttt{all} or \texttt{0} will return all solutions.

The system provides the following options to write the output of a
toolchain component into a file. Below \texttt{[PROGRAM]} may be one
of {\tt pre-processor}, {\tt grounder}, 
{\tt solver}, or {\tt post-processor}.

\begin{description}
\item[\texttt{--[PROGRAM]-output=[FILE]}] Writes the output of the
  toolchain component \texttt{[PROGRAM]} to a persistent output file
  \texttt{[FILE]}.

\item[\texttt{--to-[PROGRAM]}] Executes the program toolchain up to
  and including \texttt{[PROGRAM]}. Similarly,
  \texttt{--from-[PROGRAM]} selects a program to initiate execution
  with and continue from.
\end{description}

As an example, if the user wants to run the toolchain up to the
preprocessor and store the results for use later, he could use
the command
\lstset{ 
  numbers=none}
\begin{verbatim}
   cplus2asp bw-test --to-pre-processor > bw-test.lp.
\end{verbatim}
Later, he could then run the command
\begin{verbatim}
   cplus2asp bw-test.lp --from-grounder query=simple
\end{verbatim}
to continue execution.

\BOCC
A partial list of additional command-line options is available below.
\begin{description}
\item[\texttt{--minstep=[VALUE]}] Overrides the query provided minimum
  step value to evaluate (or 0 if no value was provided) for the
  duration of the run. 
\item[\texttt{--maxstep=[VALUE]}] Similar to \texttt{minstep} but for
  the maximum step value to evaluate. This can also be specified in as
  a finite number range, such as \texttt{0..10}.
\item[\texttt{--mode=[MODE]}] By default, {\sc Cplus2ASP} uses the
  \texttt{incremental} translation mode. This option modifies the
  behavior, allowing for the use of a static translation through the
  \texttt{static-auto} or \texttt{static-manual} modes (the
  \texttt{-auto} refers to the level of user interaction;
  \texttt{static-manual} affords the user greater control of the
  maximum step during each increment.).

\item[\texttt{--[PROGRAM]-output=[FILE]}] Writes the output of the
  toolchain component \texttt{[PROGRAM]} to a persistent output file.
  \texttt{[PROGRAM]} may refer to {\tt pre-processor}, {\tt grounder},
    {\tt solver}, or {\tt post-processor}.

\item[\texttt{--to-[PROGRAM]}] Executes the program toolchain up to
  and including \texttt{[PROGRAM]}. Similarly,
  \texttt{--from-[PROGRAM]} selects a program to initiate execution
  with and continue from.

As an example, if we wished to run the toolchain up to the
preprocessor and store the results for use later we could simply use
the command
\lstset{ 
  numbers=none}
\begin{verbatim}
   cplus2asp bw-test --to-pre-processor > bw-test.lp.
\end{verbatim}
Later, we could then run the command
\begin{verbatim}
   cplus2asp bw-test.lp --from-solver query=simple
\end{verbatim}
to continue execution.
\end{description}
\EOCC

\vspace{-2mm}
\subsubsection{Using the Interactive Mode}

The user-interactive mode provides a shell-like interface which allows the
user to perform many of the configurations available from the command
line. In general, the user-interactive mode is entered any time the user 
fails to provide all necessary information within the command-line 
arguments. As such, the easiest way to enter the user-interactive mode is 
to neglect to specify a query on the command-line. As an example, the command
\begin{verbatim}
    cplus2asp bw-test
\end{verbatim}
will enter the user-interactive mode.

While in the user-interactive mode, the following commands, among
others, are available to the user: 
\begin{description}
\item[\texttt{help}] Displays the list of available commands. 
\item[\texttt{config}] Reveals the currently selected running options.
\item[\texttt{queries}] Displays the list of available queries to run. 
\item[\texttt{minstep=[\#]}] Overrides the minimum step to solve for the next query selected.
\item[\texttt{maxstep=[\#]}] Overrides the maximum step to solve for the next query selected.
\item[\texttt{sol=[\#]}] Selects the number of solutions to display.
\item[\texttt{query=[QUERY]}] Runs the selected query and returns the results.
\item[\texttt{exit}] Exits the program.
\end{description}
Following successful execution of a query, the system will return to
the interactive prompt and the process can be repeated. 
For more information on using \textsc{Cplus2ASP} v2, we invite the
reader to explore the documentation available at
\url{http://reasoning.eas.asu.edu/cplus2asp} or within the help usage
message available by executing {\tt cplus2asp --help}.

\subsection{Lua in System \textsc{Cplus2ASP} v2}

System {\sc Cplus2ASP} v2 allows for
embedding external Lua function calls in the system, which are
evaluated at grounding time. These Lua calls allow the user a great
deal of flexibility when designing a program and can be used for
complex computation that is not easily expressible in logic programs. 
A Lua function must be encapsulated in
\texttt{\#begin\_lua \dots \#end\_lua.} tags, and, can
optionally be included in a separate file ending in \texttt{.lua}. Lua
calls occurring within the {\sc Cplus2ASP} program are restricted to
occurring within the \texttt{where} clause \footnote{The condition in
  the \texttt{where} clause is evaluated at 
grounding time.} of each rule and
must be prefaced with an \texttt{@} sign.



For example, one can say moving a block does not always work.\footnote{
Note that this is decided at grounding time so this is not truly
random.}
\lstset{ 
  numbers=none}
\begin{verbatim}
  move(B,L) causes loc(B)=L where @roll(1,2). 
\end{verbatim}
%
with Lua function defined as 
\lstset{ 
  numbers=none}
\begin{verbatim}
  #begin_lua
  math.randomseed(os.time())
  function roll(a,n)--returns 1 with probability a/n
    if(math.random(n) <= a) then return 1
    else return 0
    end
  end
  #end_lua.
\end{verbatim}

A more complete description of the system's Lua functionality and
additional examples of its use are available from the {\sc Cplus2ASP}
homepage.


\section{Experiments}  \label{sec:experiments}

In order to compare the performance of the {\sc Cplus2ASP} v2 system
with its predecessors, we used large variants of several widely known
domains~\footnote{%
All benchmark programs are available from the {\sc Cplus2ASP}
homepage.}
and compared the performance of {\sc Cplus2ASP}'s
running modes with the performance of {\sc CCalc}~v2, {\sc Cplus2ASP}
v1, and the incremental and static running modes of {\sc coala} (where
applicable). All experiments were performed on an Intel Core 2 Duo
3.00 GHZ CPU with 4 GB RAM running Ubuntu 11.10. The \textsc{CCalc} v2
tests used {\sc relsat} 2.0 as a SAT solver while {\sc Cplus2ASP} v1, v2,
and {\sc coala} tests used the same version of {\sc clingo}, v3.0.5. 

The domains tested include a large variant of the Traffic
World~\cite{akm04}, which models the behavior of
cars on a road; a variant of the Blocks World where actions have
costs~\cite{lee03}; the Spacecraft Integer~\cite{lee03}, which models
a spacecraft's movement with multiple
independent jets; the Towers of Hanoi; and the Ferryman domain, which
involves moving a number of wolves and sheep across a river without
allowing the sheep to be eaten.
The Towers of Hanoi and Ferryman descriptions are from examples
packaged with {\sc coala} v1.0.1. In order to run on other systems,
we manually turned them into the syntax of CCalc input language. 


\begin{table}[t]
{\scriptsize 
\begin{minipage}{\textwidth}
\centering
\begin{tabular}{||c|c||c||c||c c||c c||}
\hline
\multirow{2}{*}{\bf Domain	}			
									&	\multirow{2}{*}{\bf steps}			
									& \multirow{2}{*}{\textbf{\sc{CCalc} 2}}	
									& \multirow{2}{*}{\textbf{\sc{Cplus2ASP} v1}}	
									& \multicolumn{2}{|c||}{\textbf{\sc{coala}}}		
									& \multicolumn{2}{|c||}{\textbf{\sc{Cplus2ASP} v2}} \\
									& &	&
									& {static}
									& {incr.}
									& {static} 
									& {incr.}  \\ \hline \hline
								
\BOCC						
\multirow{2}{*}{traffic (jam)}
									&			
									&	\							
									&						
									&
                                                                        \multirow{2}{*}{--}\footnote{--
                                                                          means
                                                                          that
                                                                          the
                                                                          input
                                                                          language
                                                                          of
                                                                          {\sc
                                                                            coala}
                                                                          is
                                                                          not
                                                                          expressive
                                                                          enough
                                                                          to
                                                                          represent
                                                                          the domain.}
                                                                         &
                                                                          \multirow{2}{*}{--}	
									&	&		\\ 
									
									&
									& 
									& 
									& &
									& & \\ \hline
\EOCC

traffic 
									&	\multirow{3}{*}{11}

									& \multirow{2}{*}{878.59 s + 1 s} \footnote{preprocessing time + solving time [\# atoms  / \# rules]}		
									& \multirow{2}{*}{95.43 s + 25.95 s}
									&
                                                                        \multirow{3}{*}{--\footnote{The input language is not expressive enough to represent the domain.}
} 
									& \multirow{3}{*}{--}
									& \multicolumn{1}{|l}{82.16 s}
									& \multicolumn{1}{l||}{14.2 s} \\ 
									
(altmerge)									&
									&
									& 
									& \multirow{3}{*}{--} & \multirow{3}{*}{--}	
									& \multicolumn{1}{|r}{+ 26.57 s}
									& \multicolumn{1}{r||}{+ 2.6 s}  \\
									
									&
									& [531552 / 3671940]
									& [2722247 / 3341068]
									& &
									& \multicolumn{2}{|c||}{[2262231 / 2766459]} \\ \hline \hline
									
bw-cost
									&	\multirow{3}{*}{8}
									
									& \multirow{2}{*}{131.1 s + 5 s}
									& \multirow{2}{*}{76.16 s + 0.4 s}
									& \multirow{3}{*}{--} & \multirow{3}{*}{--}	
									& \multicolumn{1}{|l}{17.09 s}
									& \multicolumn{1}{l||}{3.47 s} \\ 
									
(15)\footnote{maximum cost}									&
									&
									&
									& & 
									& \multicolumn{1}{|r}{+ 3.16 s}
									& \multicolumn{1}{r||}{+ 0.16 s}  \\
									
									& 
									& [149032 / 624439]
									& [123517 / 260282]
									& &
									& \multicolumn{2}{|c||}{[43052 / 526923]} \\ \hline 

bw-cost 
									&	\multirow{3}{*}{9}				
									
									& \multirow{2}{*}{52 s + 987 s}
									& \multirow{2}{*}{271 s + 9.17 s}
									& \multirow{3}{*}{--} & \multirow{3}{*}{--}	
									& \multicolumn{1}{|l}{63.26 s}
									& \multicolumn{1}{l||}{13.45 s} \\ 
									
(20)									&
									&
									&
									& &
									& \multicolumn{1}{|r}{+ 66.58 s}
									& \multicolumn{1}{r||}{+ 2.24 s}  \\
									
									& 
									& [374785 / 1584778]
									& [279869 / 626496]
									&  &
									& \multicolumn{2}{|c||}{[102426 / 1745166]} \\ \hline \hline
								
{spacecraft}
									&	\multirow{3}{*}{3}
									
									& \multirow{2}{*}{173.62 s + 0 s}
									& \multirow{2}{*}{16.07 s + 2.65 s}
									& \multirow{3}{*}{--} & \multirow{3}{*}{--}	
									& \multicolumn{1}{|l}{5.57 s}
									& \multicolumn{1}{l||}{2.33 s} \\ 
									
(15/8)\footnote{domain size ($15 \times 15\times 15$) / goal position ($8\times 8\times 8)$}									&
									&
									&
									& &
									& \multicolumn{1}{|r}{+ 0.06 s}
									& \multicolumn{1}{r||}{+ 0.01 s}  \\
									
									& 
									& [128262 / 622158]
									& [146056 / 146056]
									&  &
									& \multicolumn{2}{|c||}{[132918 / 253514]} \\ \hline
									
spacecraft
									&	\multirow{3}{*}{4}
									&	\multirow{3}{*}{\textit{timeout}}
									
									& \multirow{2}{*}{208.2 s + 480.24 s}
									& \multirow{3}{*}{--} & \multirow{3}{*}{--}
									& \multicolumn{1}{|l}{67.55 s}
									& \multicolumn{1}{l||}{17.46 s} \\ 
									
(25/10)									&
									& 
									& 
									& &
									& \multicolumn{1}{|r}{+ 3.42 s}
									& \multicolumn{1}{r||}{+ 0.35 s}  \\
									
									& 
									& 
									& [760673 / 1653650]
									&  &
									& \multicolumn{2}{|c||}{[732860 / 1427771]} \\ \hline \hline
									
hanoi
									& \multirow{3}{*}{64}
									
									& \multirow{2}{*}{14 s + 1983 s}
									& \multirow{2}{*}{38.9 s + 137.27 s}
									& \multicolumn{1}{|l}{1039.15 s}
									& \multicolumn{1}{l||}{1.4 s}
									& \multicolumn{1}{|l}{547.9 s}
									& \multicolumn{1}{l||}{0.76 s} \\ 
									
(6/3) \footnote{\# disks / \# pegs}									&
									& 
									& 
									& \multicolumn{1}{|r}{+ 507.12 s}
									& \multicolumn{1}{r||}{+ 51.13 s}
									& \multicolumn{1}{|r}{+ 47.53 s}
									& \multicolumn{1}{r||}{+ 3.5 s}  \\
									
									& 
									& [13710 / 221895]
									& [37297 / 298047]
									& \multicolumn{2}{|c||}{[13798 / 410559]}
									& \multicolumn{2}{|c||}{[10086 / 202694]} \\ \hline 
									
towers
									& \multirow{3}{*}{33}				
									& \multirow{3}{*}{\textit{timeout}}
									
									& \multirow{2}{*}{31.19 s + 102.69 s}
									& \multicolumn{1}{|l}{304.02 s} 
									& \multicolumn{1}{l||}{1.51 s}
									& \multicolumn{1}{|l}{102.81 s}
									& \multicolumn{1}{l||}{1.04 s} \\ 
									
(8/4)									&
									&
									& 
									& \multicolumn{1}{|r}{+ 3017.87 s}
									& \multicolumn{1}{r||}{+ 470.23 s}
									& \multicolumn{1}{|r}{+ 89.36 s}
									& \multicolumn{1}{r||}{+ 14.8 s}  \\
									
									& 
									& 
									& [35041 / 433660]
									& \multicolumn{2}{|c||}{[12922 / 655436]} 
									& \multicolumn{2}{|c||}{[9074 / 324668]}  \\ \hline \hline				
{ferryman}
									& \multirow{3}{*}{16}
									
									& \multirow{2}{*}{39.45 s + 0 s}
									& \multirow{2}{*}{8.27 s + 2.98 s}
									& \multicolumn{1}{|l}{40.85 s}
									& \multicolumn{1}{l||}{0.87 s}
									& \multicolumn{1}{|l}{21.59 s}
									& \multicolumn{1}{l||}{0.66 s} \\ 
									
(10/4)	\footnote{\# animals /  boat capacity}								&
									& 
									& 
									& \multicolumn{1}{|r}{+ 8.71 s}
									& \multicolumn{1}{r||}{+ 1.85 s}
									& \multicolumn{1}{|r}{+ 2.37 s}
									& \multicolumn{1}{r||}{+ 0.25 s}  \\
									
									& 
									& [55905 / 308909]
									& [14122 / 120693]
									& \multicolumn{2}{|c||}{{[4973 / 358772]}}
									& \multicolumn{2}{|c||}{{[12721 / 112912]}}  \\ \hline
									
ferryman 
									& \multirow{3}{*}{26}
									
									& \multirow{2}{*}{1004.26 s + 0 s}
									& \multirow{2}{*}{85.21 s + 39.54 s}
									& \multicolumn{1}{|l}{793.13 s}
									& \multicolumn{1}{l||}{6.13 s}
									& \multicolumn{1}{|l}{318.4 s}
									& \multicolumn{1}{l||}{4.18 s} \\ 
									
(15/4)									&
									& 
									& 
									& \multicolumn{1}{|r}{+ 169.18 s}
									& \multicolumn{1}{r||}{+ 14.73 s}
									& \multicolumn{1}{|r}{+ 34.4 s}
									& \multicolumn{1}{r||}{+ 2.97 s}  \\
									
									& 
									& [256590 / 1452554]
									& [42687 / 539513]
									& \multicolumn{2}{|c||}{{[15718 / 2275992]}}
									& \multicolumn{2}{|c||}{{[39536 / 515167]}} \\ \hline \hline	
\end{tabular}
\end{minipage}
}
\caption{Benchmarking Results} 
\label{tbl:benchmarks}
\vspace{-0.7cm}
\end{table}

\begin{figure}[t]
\begin{center}
	\includegraphics[width=12cm,height=4.5cm]{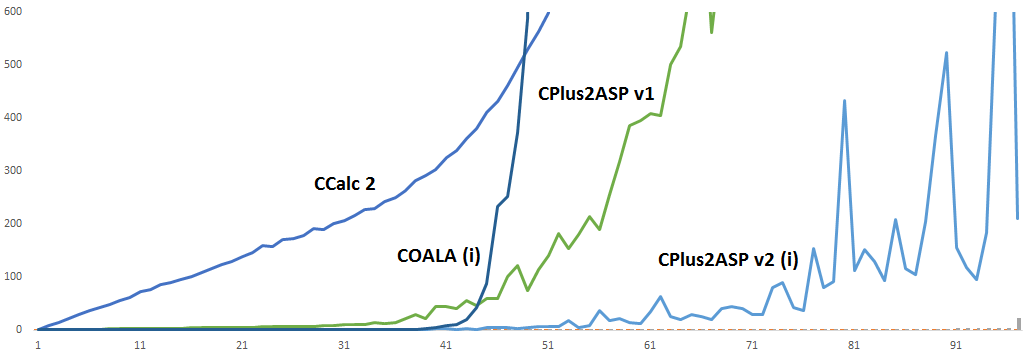}
	\caption{Ferryman 120/4 Long Horizon
          Analysis} \label{fig:long-term} 
\end{center}
\end{figure}

Table~\ref{tbl:benchmarks} compares the results of the test benchmarks
for each of the available configurations. Each measured time includes
translation, grounding, and solving for all possible maximum steps
between 0 and the horizon (\#), as well as the number of atoms and
rules produced below each timing. In all test cases {\sc Cplus2ASP}'s
incremental running mode showed a significant performance advantage
compared to the other systems, performing roughly 3 times faster than
{\sc coala}'s incremental mode and an order of magnitude faster than
its predecessor {\sc Cplus2ASP} v1. {\sc coala}'s incremental running
mode comes in the second place in all but one benchmark.
{\sc Cplus2ASP} v2's static mode tended to outperform its predecessor
on the more computation-heavy domains with additive fluents, but was
subsequently 
outmatched in the others. Finally, {\sc CCalc} 2 and {\sc coala}'s
static mode came in last (with {\sc CCalc} performing slightly worse
in most cases).

Figure {\ref{fig:long-term}} shows a more detailed analysis of the
execution of 
{the first 100 steps of solving an extreme variant of the ferryman 
domain consisting of 120 of each animal} by graphing the time
spent (in seconds) on each step by each configuration. While the
static configurations were required to completely re-ground and
re-solve the translated answer set program for each maximum step,
resulting in an ever-growing amount of work to be performed at each
step, {\sc Cplus2ASP} v2's incremental running mode is able to avoid
this by only grounding the new cumulative ($P[t]$) and volatile
($Q[t]$) components and { leveraging heuristics learned from previous
iterations}. This results in { far less time being required
for checking each increment.}

Although {\sc coala}'s incremental mode uses the same reasoning
engine {\sc iclingo} as {\sc Cplus2ASP} v2's incremental mode,
system {\sc Cplus2ASP} sees a significant overall speed-up over 
{\sc coala}. This is related to a significant reduction in the
number of atoms and rules produced during grounding, which also
accounts for far fewer conflicts and restarts during solving in all
test cases.



\section{Conclusion}  \label{sec:conclusion}

A distinct advantage that \textsc{Cplus2ASP}~v2 has over its
prototypical predecessor is that it was re-engineered in order to
allow for far greater flexibility and extensibility via a multi-modal
execution model. This makes it suitable for use as a base-platform for
future input language implementations, input language extensions, or
target languages/platforms.

The advances in ASP solving techniques account for the efficiency
of system {\sc Cplus2ASP}. We expect that the significant speed-up of
the system demonstrated by {\sc Cplus2ASP} v2, as well as the enhanced
expressivity of the input language, 
will contribute to widening application of action languages in various
domains. 

\medskip\noindent
{\bf Acknowledgements:} 
We are grateful to Michael Bartholomew and the anonymous
referees for their useful comments. This work was partially supported
by the National Science Foundation under Grant IIS-0916116 and by the
South Korea IT R\&D program MKE/KIAT 2010-TD-300404-001.

\bibliographystyle{splncs}

\end{document}